\newcommand{\EsannArXiv}[2]{#2}
\newtheorem{theorem}{Theorem}
\newtheorem{proposition}{Proposition}
\newtheorem{lemma}{Lemma}
\theoremstyle{definition}
\newcommand{\N}{\mathbb{N}}
\newcommand{\R}{\mathbb{R}}
\newcommand{\I}{\mathbb{I}}
\renewcommand{\P}{\mathbb{P}}
\newcommand{\X}{\mathcal{X}}
\newcommand{\T}{\mathcal{T}}
\newcommand{\W}{\mathcal{W}}
\newcommand{\Wm}{\mathbf{W}}
\newcommand{\D}{\mathcal{D}}
\newcommand{\PT}{P_T}
\newcommand{\TV}{\textnormal{TV}}
\newcommand{\loc}{\textnormal{loc}}
\newcommand{\Adv}{\textnormal{Adv}}
\newcommand{\1}{\mathbf{1}}
\renewcommand{\d}{\textnormal{d}}
\newcommand{\no}{\ding{55}}%
\begin{document}
\title{Adversarial Attacks for Drift Detection\EsannArXiv{}{\footnote{This is an extended version. The original paper was accepted at the European Symposium on Artificial Neural Networks, Computational Intelligence and Machine Learning (ESANN).}}}

\author{Fabian Hinder, Valerie Vaquet, and Barbara Hammer\thanks{Funding from the European Research Council (ERC) under the ERC Synergy Grant Water-Futures (Grant agreement No. 951424) and funding in the scope of the BMBF project KI Akademie OWL under grant agreement No 01IS24057A 
is gratefully acknowledged.}
\vspace{.3cm}\\
Bielefeld University --
Inspiration 1, 33619 Bielefeld -- Germany
}

\maketitle

\begin{abstract}
Concept drift refers to the change of data distributions over time. While drift poses a challenge for learning models, requiring their continual adaption, it is also relevant in system monitoring to detect malfunctions, system failures, and unexpected behavior. In the latter case, the robust and reliable detection of drifts is imperative. This work studies the shortcomings of commonly used drift detection schemes. 
We show that they are prone to adversarial attacks, i.e., streams with undetected drift. In particular, we give necessary and sufficient conditions for their existence, provide methods for their construction, and demonstrate this behavior in experiments.
\end{abstract}

\section{Introduction}
\label{sec:intro}
Data from the real world is often subject to continuous changes known as concept drift
\cite{oneortwo,DBLP:journals/adt/BifetG20,asurveyonconceptdriftadaption}.
Such can be caused by seasonal changes, changed demands, aging of sensors, etc. 
Concept drift not only poses a problem for maintaining high performance in learning models~\cite{DBLP:journals/adt/BifetG20,asurveyonconceptdriftadaption} but also plays a crucial role in system monitoring~\cite{oneortwo}. 
In the latter case, the detection of concept drift is crucial as it enables the detection of anomalous behavior. Examples include machine malfunctions or failures, network security, environmental changes, and critical infrastructures. This is done by detecting irregular drifts~\cite{water_icpram,oneortwo,DBLP:journals/corr/WebbLPG17}. In these contexts, the ability to robustly detect drift is essential.

In addition to problems such as noise and sampling error, which challenge all statistical methods, drift detection faces a special kind of difficulty when the drift follows certain patterns that evade detection.
In this work, we study those specific drifts that we will refer to as ``drift adversarials''. Similar to adversarial attacks in classification, that exploit model properties to force wrong decisions~\cite{chakraborty2021survey}, drift adversarials exploit weaknesses in the detection methods, and thus allow significant concept drift to occur without triggering alarms posing major issues for monitoring systems. 
Besides the construction of drift adversarials, the presented theory also provides tools to check whether a specific drift detector is provably correct. 

This paper is structured as follows: First (\cref{sec:setup}) we recall the definition of concept drift and define the two setups for which we will construct our adversarials. In \cref{sec:main}, we construct drift adversarials that exploit which data is used for the analysis. Here, we mainly focus on two window approaches. In the last part (\cref{sec:experiments}) we perform a numerical evaluation of our considerations and conclude the work (\cref{sec:conclusion}).

\begin{algorithm}[!t]
	\caption{Two Window Drift Detector (no memory management)}
	\label{algo:dd}
	\begin{algorithmic}[1]\small
		\Procedure{DriftDetection}{$(x_i)_{i=1}^n$ data stream, $\theta$ detection threshold} \;
		\For{$(W_1,W_2) \in \W(n)$} \; \Comment{iterate over all considered window pairs}
        \State $p \gets \textsc{Test}(\{x_i \mid i \in W_1\}, \{x_i \mid i \in W_2\})$ \label{algo:dd:test}\Comment{Test samples in $W_1$ and $W_2$}
        \If{$p < \theta$}
	            \State Alert drift
	    \EndIf
		\EndFor
		\EndProcedure
	\end{algorithmic}
\end{algorithm}

\section{Concept Drift and Drift Detection}
\label{sec:setup}

Most machine learning research focuses on the batch setup where one considers a fixed data set as i.i.d. random variables $X_1,\dots,X_n$ following some distribution $\D$ on the data space $\X$. However, in many scenarios, data is obtained as a stream over time and is thus prone to potential changes of the underlying distribution, a phenomenon known as \emph{concept drift}~\cite{oneortwo,asurveyonconceptdriftadaption}. In such \emph{finite sample} setup, drift is typically defined in a \emph{sample-wise sense}, that is two samples not having the same distribution, i.e., $\exists i,j : \P_{X_i} \neq \P_{X_j}$~\cite{asurveyonconceptdriftadaption}. \emph{Drift detection} refers to the task of deciding whether or not the stream is affected by drift. 
One issue of this setup is that one cannot estimate the distribution $\P_{X_i}$ the single sample $X_i$ follows~\cite{oneortwo}. 

To analyze this task theoretically, we consider an extension building on distribution processes~\cite{oneortwo} describing the \emph{limiting case}. We model a time $\T$ indexed family of probability measures $\D_t$ on $\X$ together with an observation probability $\PT$ on $\T$~\cite{oneortwo}. 
A \emph{stream} consists of dated data points $(X_1,T_1), (X_2,T_2),\dots$ such that a data point $X_i$ observed at time $t$  follows the distribution $\D_t$, i.e., $T_i \sim \PT$ and $X_i \mid T_i = t \sim \D_t$.
\emph{Concept drift} occurs if the chance of observing two different distributions is larger zero~\cite{oneortwo}, i.e., $\P[\exists i,j : \P_{X_i} \neq \P_{X_j}] > 0$. 
Notice that this definition is not limited to abrupt drift but all kinds of drift, such as gradual or recurring, and the statistical nature resolves the estimation problem. 

In this paper, we are interested in constructing scenarios containing drift that is not detected. We will first investigate this task theoretically by examining the limiting case leveraging the definition by distribution processes. This also allows us to prove guarantees. Afterward, we study the finite case and derive a practical algorithm for the construction of drift adversarials.

\section{Adversarial Attacks for Drift Detection}
\label{sec:main}

Most drift detectors process data on sliding windows using some statistical tool, most commonly a metric~\cite{oneortwo} (see \cref{algo:dd}). This allows for two natural attack scenarios: \emph{Metric Adversarials} construct distributions indistinguishable by the metric, while \emph{Window Adversarials} exploit the data selection stage. 

\subsection{Metric Adversarials}
The most commonly used drift detectors are based on learning models referring to the optimal model or model accuracy to detect drift~\cite{asurveyonconceptdriftadaption}. However, as already pointed out in \cite{icpram} this approach is flawed and can be exploited in many cases. Indeed, the authors provide a constructive proof that can easily be modified to construct a metric adversarial. Other approaches for which metric adversarials can be constructed include methods like the windowing Kolmogorov-Smirnov test that operates feature-wise and thus ignores drifts in correlations as shown in~\cite{oneortwo} or methods that use deep embeddings for which classical adversarials can be constructed. However, in many cases, it is not possible to construct a metric adversarial, e.g., when the used metric is indeed a metric. We will therefore mainly focus on window adversarials in this paper. 

\subsection{Window Adversarials for Two-Window-Based Detectors}
As most drift detectors work by comparing data from two windows \cite{oneortwo} we will focus on this setup. In the following, we consider the limiting and finite case.

\paragraph{The limiting case} 
We refer to the case where we take the sampling rate to infinity so that errors due to sampling vanish and the drift detector becomes a map of the kernel $\D_t$, i.e., the limiting case of \cref{algo:dd:test} takes on the form
\begin{align}
    A(\D_t) = \1\left[ \sup_{(W_1,W_2) \in \W} d(\D_{W_1},\D_{W_2}) > 0\right] \label{eq:limit_DD}
\end{align}
\begin{table}[t]
    \centering
    \caption{Overview of improper adversarial functions for common windowing schemes used in drift detection (assuming Lebesgue measure $\PT=\lambda$). Cases with Boundary Effects (BE) are marked. Proofs in \EsannArXiv{ArXiv version~\cite{arxivVersion}}{appendix}.
    \label{tab:adv}}
    {\small
    \begin{tabular}{cccc}
        \toprule
        $\T$ & $\W$ & $\Adv_0$  (\cref{thm:main}) & BE \\
        \midrule
        $\R$ & $([t-l,t],[t,t+l])$ $t \in \T$ & $f(t) = f(t+l)$ & \no \\
        & \footnotesize two sliding windows & \footnotesize $l$-periodic functions \\[0.4em]
        $\R_{\geq 0}$ & $([0,a],[t,t+l])$ $t \geq a$ & $\text{$f(t) = f(t+l)$ for $t \geq a$ and}\atop\text{$a^{-1} \int_0^a f(t) \d t = l^{-1}\int_a^{a+l} f(t) \d t$}$ &  \\
        & \footnotesize fixed reference window & \footnotesize $l$-periodic after $a$ with same mean \\[0.4em]
        $\R_{\geq 0}$ & $([0,t],[t,t+l])$ $t \geq a$ & $a^{-1} \int_0^a f(s) \d s = f(a) = f(t) \forall t \geq a$ & \no \\
        & \footnotesize growing reference window & \footnotesize arbitrary before $a$ and then constant \\
        \bottomrule
    \end{tabular}
    }
\end{table}
where $\W$ is the set of all window pairs directly compared by the detector and $d$ is the used metric. Here, $A$ detects drift if $A(\D_t) = 1$. As $A$ cannot have false positives, the window adversarials are given by false negatives which can be constructed as follows:
\begin{theorem}
\label{thm:main}
Define the \emph{improper adversarial functions} for $A$ as in \cref{eq:limit_DD} as\vspace{-0.8em}\linebreak\resizebox{\textwidth}{!}{%
\begin{minipage}{1.2\textwidth}%
\begin{align}\Adv_0(A) = \left\{f : \T \to [0,1] \:\left|\: \PT(W_2)\int_{W_1} \!\!\!\!f\,\, \d \PT = \PT(W_1)\int_{W_2} \!\!\!\!f\,\, \d \PT \forall (W_1,W_2) \in \W\right.\right\}\end{align}\end{minipage}}\linebreak 
then $A$ detects no drift, i.e., $A(\D_t) = 0$, if and only if $t \mapsto \D_t(S) \in \Adv_0(A)$ for all measurable $S \subset \X$.

Define the \emph{adversarial functions} $\Adv(A) \subset \Adv_0(A)$ as those that are not constant. 
Then, $\Adv(A)$ describes all distribution processes with drift that is not detected by $A$. 
In particular, for $f \in \Adv(A)$ and distributions $P \neq Q$ on $\X$, $\D_t = f(t)P + (1-f(t))Q$ is a window adversarial, i.e., $\D_t$ has drift and $A(\D_t) = 0$. Conversely, for every window adversarial $A(\D_t) = 0$ we have $t \mapsto \D_t(S) \in \Adv_0(A)$. 
Therefore, if $\Adv(A) = \emptyset$ then $A$ detects every drift assuming $d$ is a metric. 
\end{theorem}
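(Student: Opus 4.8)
The plan is to establish the theorem by unfolding the definitions and carrying out one short linearity computation; the only genuine care needed is the measure-theoretic bookkeeping. Throughout, write $\D_W$ for the distribution observed on a window $W$ with $\PT(W)>0$, i.e.\ $\D_W(S)=\PT(W)^{-1}\int_W\D_t(S)\,\d\PT(t)$, which is a probability measure on $\X$ once Fubini is invoked for the jointly measurable map $(t,S)\mapsto\D_t(S)$.

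\emph{Step 1 (the characterisation).} Since $d\ge0$, $A(\D_t)=0$ holds exactly when $d(\D_{W_1},\D_{W_2})=0$ for every $(W_1,W_2)\in\W$, and, $d$ being a metric, this is equivalent to $\D_{W_1}=\D_{W_2}$, i.e.\ $\D_{W_1}(S)=\D_{W_2}(S)$ for all measurable $S\subset\X$. Clearing the positive factors $\PT(W_1),\PT(W_2)$ and writing $f_S(t):=\D_t(S)\in[0,1]$, this becomes $\PT(W_2)\int_{W_1}f_S\,\d\PT=\PT(W_1)\int_{W_2}f_S\,\d\PT$; quantified over all window pairs it says exactly $f_S\in\Adv_0(A)$, which is the claimed equivalence. (If $d$ is only a pseudometric, only the implication ``$f_S\in\Adv_0(A)$ for all $S$ entails $A(\D_t)=0$'' remains, via $d(\D_{W_1},\D_{W_1})=0$.)

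\emph{Step 2 (drift versus non-constancy; the construction).} A process carries drift iff $t\mapsto\D_t$ is non-constant off a $\PT$-null set, equivalently iff some $f_S$ is non-constant (two measures differ iff they differ on a set); this is the characterisation recalled in \cref{sec:setup} and \cite{oneortwo}. With Step 1 this yields: a process is a window adversarial iff \emph{all} its coordinate functions $f_S$ lie in $\Adv_0(A)$ and \emph{at least one} lies in $\Adv(A)$; in particular every window adversarial has $t\mapsto\D_t(S)\in\Adv_0(A)$ for all $S$, which is the ``conversely''. For the explicit construction, given $f\in\Adv(A)$ and $P\ne Q$ put $\D_t=f(t)P+(1-f(t))Q$: this is a probability measure since $f(t)\in[0,1]$, and $\D_t(S)=Q(S)+f(t)(P(S)-Q(S))$ is affine in $f(t)$. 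Substituting $g_S:=(t\mapsto\D_t(S))$ into the identity defining $\Adv_0(A)$, the constant term $Q(S)$ cancels and the linear term is $P(S)-Q(S)$ times the identity $f$ already satisfies, so $g_S\in\Adv_0(A)$ for every $S$ and hence $A(\D_t)=0$ by Step 1; since $f$ is non-constant off a $\PT$-null set and $P(S_0)\ne Q(S_0)$ for some $S_0$, the process carries drift, so it is a window adversarial.

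\emph{Step 3 (the final statement and the main obstacle).} If $\Adv(A)=\emptyset$ then every element of $\Adv_0(A)$ is constant, so any process with $A(\D_t)=0$ has all $f_S$ constant, hence $t\mapsto\D_t$ is $\PT$-a.e.\ constant and the process carries no drift; contrapositively $A(\D_t)=1$ whenever there is drift, and together with the absence of false positives noted before the theorem this makes $A$ a correct detector. The main difficulty is not any single step but getting the bookkeeping right throughout: justifying Fubini for the mixtures $\D_W$, handling windows with $\PT(W)=0$, reading ``non-constant'' consistently as ``non-constant off a $\PT$-null set'', and pinning down that ``$d$ is a metric'' enters only through the passage $d(\D_{W_1},\D_{W_2})=0\Rightarrow\D_{W_1}=\D_{W_2}$ in Step 1.
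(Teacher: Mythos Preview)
Your proof is correct and follows essentially the same route as the paper's: the characterisation via $d(\D_{W_1},\D_{W_2})=0\Leftrightarrow\D_{W_1}=\D_{W_2}\Leftrightarrow\D_{W_1}(S)=\D_{W_2}(S)$ for all $S$, the affine-closure observation $\lambda f+\mu\in\Adv_0(A)$ for the mixture construction, and the identification of drift with non-constancy of some $f_S$ are exactly the paper's ingredients. Your additional remarks on Fubini, null sets, and the precise role of the metric axiom are welcome rigor that the paper leaves implicit.
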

\EsannArXiv{
\begin{proof}
    All proofs can be found in the ArXiv version~\cite{arxivVersion}.
\end{proof}}{The proof is given in the appendix.}

We want to stress that the adversaials do not depend on the metric $d$ but only the considered windows and that the drift detector detects every possible drift if and only if $\Adv(A) = \emptyset$. This can for example be achieved by combining multiple drift detectors as $\Adv( \,(A,B)\,) \subset \Adv(A) \cap \Adv(B)$.

Most detectors use a sliding window for the current distribution of fixed length. There are three main strategies for the reference window: 1)~fixed, 2)~growing, and 3)~sliding with fixed length~\cite{oneortwo}. Furthermore, there are two update strategies: Either the update is performed after every single data point, which in the limit is for every time point, or by considering chunks of data points. For the latter, we can hide arbitrary drifts within a chunk allowing for trivial adversarials. For point-wise updates and any of the aforementioned reference windows, we present the adversarial functions in \cref{tab:adv}. 

\paragraph{The finite case} Analog to the limiting case we can also consider the case of finite samples $X_1,\dots,X_n$.
In this case, the windows refer to which samples are considered, i.e., $W_1,W_2 \subset [n]$. We will denote the set of all window pairs for $n$ samples by $\W(n)$ together with a normalized distance measure, e.g., a statistic test, and a decision threshold $\theta$ this leads to \cref{algo:dd}. Usually, there is some memory management so that we do not have to store the entire stream which however depends on the windowing scheme $\W(n)$.

\begin{algorithm}[!t]
	\caption{Construction of Drift Adversarials}
	\label{algo:adv}
	\begin{algorithmic}[1]\small
		\Function{ConstructDriftAdversarial}{$P,Q$ sampling distributions, $\W(n)$ windowing scheme to be attacked} \;
        \State $\Wm_n \gets [\textbf{1}]$
		\For{$(W_1,W_2) \in \W(n)$} \;
		\State $\Wm_n \gets \Wm_n + \left[ |W_{1}|^{-1} \sum_{i = 1}^n \1[i \in W_{1}] e_i - |W_{2}|^{-1} \sum_{i = 1}^n \1[i \in W_{2}] e_i \right]$
        \EndFor
        \State $v \gets \textsc{Solve}(\Wm_n x = 0)$ \label{algo:adv:solve} \Comment{Interpret $\Wm_n$ as a matrix}
        \State $v \gets \frac{v - \min_i v_i}{\max_i v_i - \min_i v_i}$
        \State $x \gets []$
        \For{$i = 1,\dots,n$}
        \State $x \gets x + [\textsc{Sample}(v_iP + (1-v_i)Q)]$
        \EndFor 
        \State\Return $x$
        \EndFunction
	\end{algorithmic}
\end{algorithm}

We can encode the window selection $\W(n)$ into a single weight matrix $\Wm_n$ that encodes the pair $(W_1,W_2)$ as the vector $w = |W_1|^{-1}\sum_{i \in W_1} e_i - $\linebreak$|W_2|^{-1}\sum_{i \in W_2} e_i$ where $e_i$ is the $i$-th coordinate vector. This representation is quite useful as it for example allows us to write the biased MMD -- a kernel-based probability metric commonly used in drift detection~\cite{oneortwo} -- of the $i$-th window as $(\Wm_n^\top K \Wm_n)_{ii}$ where $K_{ij} = k(X_i,X_j)$ is the kernel matrix. For our purpose, it is useful as the kernel of $\Wm_n$ can be used to construct drift adversarials. To do so choose $v \in [0,1]^n$ with $\Wm_n v = 0$ and then sample $X_i \sim v_iP + (1-v_i)Q$. 
This idea is represented in \cref{algo:adv}. If $v$ is not constant and $P \neq Q$ then the distributions differ for some $X_i$, i.e., there is drift in the sample-wise sense, the mean distributions of the samples in $W_1$ and $W_2$ however coincide for all $(W_1,W_2) \in \W(n)$ which is what \cref{algo:dd} line~\ref{algo:dd:test} is testing for. There are ways to increase the quality by choosing $v \in \{0,1\}^n$ or trying to avoid fast oscillations as such streams are similar to non-drifting streams.
Notice, that there is a tight connection between the limiting and the finite setup which is given by sampling adversarial functions (\cref{thm:main}) equidistant to obtain $v$. 
Yet, $\{v \mid \Wm_n v = 0\}$ can be much larger than $\Adv(A)$ due to boundary effects (BE in \cref{tab:adv}).

Instead of comparing the mean distribution of two windows, some drift detectors -- dubbed block-based in \cite{oneortwo} -- check for any kind of drift within a single window. Using similar techniques it can be shown that such detectors are not prone to window adversarial attacks. 
In the next section, we will test our theoretical observations empirically.

\section{Empirical Evaluation}
\label{sec:experiments}

To evaluate our methodology we consider two empirical setups: a numerical analysis on synthetic data, and a showcase on data from critical infrastructure.\footnote{The code can be found at \url{https://github.com/FabianHinder/Drift-Adversarials}}

\begin{table}[t]
  \caption{Result of numerical analysis. $90\%/10\%$-quintile of obtained $p$-values (500 runs). Correct result is $p = 0$, lining marks adversarials according to theory. The number in brackets is the length of the initial reference window.}
      \centering
      {\footnotesize
  \begin{tabular}{lc@{\,}c@{\,}c@{\,}c@{\,}c@{\,}c}
\toprule
Dataset / $\W$ & fixed (100) & fixed (150) & grow (100) & grow (150) & slideing \\
\midrule
Periodic & $\underline{0.63/0.28}$ & $0.00/0.00$ & $0.00/0.00$ & $0.00/0.00$ & $\underline{0.39/0.27}$ \\
Rand.Const (100) & $\underline{0.59/0.31}$ & $\underline{0.54/0.31}$ & $\underline{0.39/0.27}$ & $\underline{0.46/0.28}$ & $0.00/0.00$ \\
Rand.Const (150) & $0.00/0.00$ & $\underline{0.50/0.25}$ & $0.00/0.00$ & $\underline{0.45/0.24}$ & $0.00/0.00$ \\
Rand.Per. (100) & $\underline{0.49/0.26}$ & $0.00/0.00$ & $0.00/0.00$ & $0.00/0.00$ & $0.00/0.00$ \\
Rand.Per. (150) & $0.00/0.00$ & $\underline{0.65/0.22}$ & $0.00/0.00$ & $0.02/0.00$ & $0.00/0.00$ \\
\bottomrule
\end{tabular}}
  \label{tab:evaluation}
\end{table}
\paragraph{Synthetic Data} We perform a numerical analysis based on the simple two-squares dataset \cite{oneortwo} (drift intensity 5). We create the adversarial streams using \cref{algo:adv} where line~\ref{algo:adv:solve} is performed by hand to assure $v_i \in \{0,1\}$ with as little changes as possible (see \cref{tab:adv}). Each stream has a length of $1{,}000$ samples, window sizes of the sliding window is 100, and (initial) reference window is 100/150. We use the permutation MMD test~\cite{oneortwo} with $2{,}500$ permutations and consider the smallest $p$-value found in the stream. We performed $500$ independent runs for each setup. The results are reported in \cref{tab:evaluation}. 
As can be seen, there is a (nearly) perfect alignment of our theoretical predictions and the empirical results with only one exceptional case.

\paragraph{Application to Water Distribution Networks} 
Thus far we have considered drift adversarials as a kind of attack where we try to hide the drift from the monitoring system. However, in case we expect certain drifts that we do not want to detect, we can try to construct a drift detector so that the adversarials are exactly the expected drifts. For this study, we explicitly consider water distribution networks from which we obtain pressure measurements~\cite{water_icpram}. We are interested in leakage detection which can be done via drift detection. However, as the demands on drinking water are not constant over time, we expect daily (day-night-cycle) and weekly (week-weekend-cycle) patterns, which need to be removed before drift and leakages are directly related~\cite{water_icpram}. 
Following~\cite{water_icpram}, we use the Shape Drift Detector~\cite{hinder2021shape} which postprocesses the MMD of two consecutive sliding windows to find candidate drift points. The result of different window lengths is presented in \cref{fig:shape}. As can be seen, windows of one-day length detects weekends, while a one-week length window mainly detects the leakage as desired. Also notice, that this is not an instability of the algorithm as can be seen by considering the window length of $6\frac{1}{2}$ days (middle figure). 
\begin{figure}
    \centering
    \includegraphics[width=0.33\textwidth,height=1.5cm,trim={5mm 6mm {380mm} 7mm},clip]{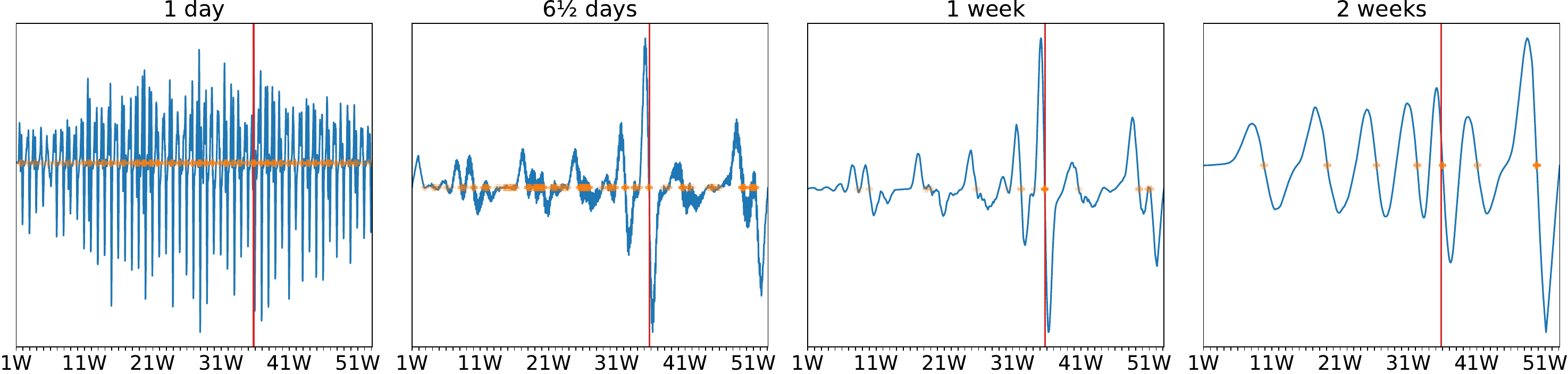}
    \!\includegraphics[width=0.33\textwidth,height=1.5cm,trim={{131mm} 6mm {254mm} 7mm},clip]{shape.pdf} 
    \!\includegraphics[width=0.33\textwidth,height=1.5cm,trim={{257mm} 6mm {128mm} 7mm},clip]{shape.pdf}
    \caption{Shape curve for different window sizes (1 day, $6\frac{1}{2}$ days, 1 week). Red line marks leakage, orange crosses candidate points (transparency is MMD).}
  \label{fig:shape}
\end{figure}

\section{Conclusion}
\label{sec:conclusion}
In this paper, we considered the concept of drift adversarials. We showed that many commonly used drift detectors are subject to at least some drift adversarial attacks. We considered the problem from a general theoretical and concrete point of view and evaluated our findings empirically. Furthermore, we investigated the potential of our theory to construct problem-tailored drift detectors which seems to be a promising approach but requires further research. 

Our considerations show that drift adversarials pose a major problem but might be numerically unstable. A further analysis is yet subject to future work. 

\begin{footnotesize}
\bibliographystyle{unsrt}
\bibliography{bib}
\end{footnotesize}

\EsannArXiv{}{
\newpage\appendix

\section{Proofs}
In the following section, we will provide formal proofs of the statements made above.

\subsection{Proof of \cref{thm:main}}
We start with the proof our the main theorem:
\begin{proof}[Proof of \cref{thm:main}]
We have $d(\D_{W_1},\D_{W_2}) = 0$ if and only if $\D_{W_1} = \D_{W_2}$ if and only if 
\begin{align*}
0 &= \Vert \D_{W_1} - \D_{W_2} \Vert_\TV 
\\&= \sup_{S \in \Sigma_\X} \left|\D_{W_1}(S) - \D_{W_2}(S)\right| 
\\&=  \sup_{S \in \Sigma_\X} \left|\PT(W_1)^{-1}\int_{W_1} \D_t(S) \d \PT(t) - \PT(W_2)^{-1}\int \D_t(S) \d \PT(t)\right|
\\&= \frac{\sup_{S \in \Sigma_\X} \left|\PT(W_2)\int_{W_1} \D_t(S) \d \PT(t) - \PT(W_1) \int_{W_2} \D_t(S) \d \PT(t)\right|}{\PT(W_1)\PT(W_2)}
\end{align*}
and therefore
\begin{align*}
0 &= \sup_{(W_1,W_2) \in \W} d(\D_{W_1},\D_{W_2})\\\Leftrightarrow 0 &= \sup_{(W_1,W_2) \in \W} \Vert \D_{W_1} - \D_{W_2} \Vert_\TV 
\\\Leftrightarrow 0 &= \sup_{(W_1,W_2) \in \W}\sup_{S \in \Sigma_\X} \left|\PT(W_2)\int_{W_1} \D_t(S) \d \PT(t) - \PT(W_1) \int_{W_2} \D_t(S) \d \PT(t)\right|
\\&= \sup_{S \in \Sigma_\X}\sup_{(W_1,W_2) \in \W} \left|\PT(W_2)\int_{W_1} \D_t(S) \d \PT(t) - \PT(W_1) \int_{W_2} \D_t(S) \d \PT(t)\right|
\end{align*}
which is the case if and only if $t \mapsto \D_t(S) \in \Adv_0(A)$.

Obviously $\Adv_0(A)$ is convex. So in particular, if $f \in \Adv_0(A)$ then $\lambda f + \mu \in \Adv_0(A)$ if the resulting function stays in $[0,1]$. Hence for any $P,Q \in \textnormal{Pr}(\X)$ and $S \in \Sigma_\X$ we have $f(t)P(S) + (1-f(t))Q(S) = f(t)(P(S)-Q(S)) + Q(S) \in \Adv_0(A)$. 

Conversely, $\D_t$ has drift if and only if $t \mapsto \D_t(S) \not\in \text{Const}(\T)$ for some $S \in \Sigma_\X$. As $A$ detects the drift if and only if $t \mapsto \D_t(S) \not\in \Adv_0(A)$ for some $S \in \Sigma_\X$ and $\Adv_0(A) = \text{Const}(\T) \cup \Adv(A)$ we see that the statement ``$A$ detects drift if and only if $\D_t$ has drift'' holds true if and only if $\Adv(A) = \emptyset$. 
\end{proof}

\subsection{Proofs of the statements in \cref{tab:adv}}
We will now compute the improper adversarial sets displayed in \cref{tab:adv}. To do so, we will compute the sets 
\begin{align*}
\ker \W = \left\{f \in L^1_\loc(\T) \:\left|\: \lambda(W_2)\int_{W_1} f(t) \d t = \lambda(W_1)\int_{W_2} f(t) \d t \forall (W_1,W_2) \in \W\right.\right\}.
\end{align*}

\begin{proposition}[Fixed reference]
    For $\W = \{ ([0,a],[t,t+l]) \mid a \leq t \}$ with $a, l > 0$ and $f \in L^1_\loc(\R_{\geq 0})$ the following are equivalent 
    \begin{enumerate}
        \item $f \in \ker \W$
        \item for all $t > a$ we have
        \begin{align*}
            \frac{l}{a} \int_0^a f(x) \d x = \int_{t}^{t+l}  f(x) \d x 
        \end{align*}
        \item $f$ is $l$-periodic after $a$, i.e., $f(t) = f(t+l) \forall t > a$, and has the same mean as before, i.e., $\frac{l}{a} \int_0^a f(x) \d x = \int_a^{a+l} f(x) \d x$.
    \end{enumerate}
    In particular, $\Adv_0(\W)\R = \ker \W$ are the functions that are $l$-periodic after $a$ and have the same mean as before. There are no boundary effects. 
\end{proposition}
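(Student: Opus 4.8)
The plan is to prove the cycle $(1)\Leftrightarrow(2)\Leftrightarrow(3)$ and then read off the statements about $\Adv_0$ and boundary effects.

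First I would dispatch $(1)\Leftrightarrow(2)$, which is pure bookkeeping. For a pair $(W_1,W_2)=([0,a],[t,t+l])\in\W$ one has $\lambda(W_1)=a$ and $\lambda(W_2)=l$, so the defining relation $\lambda(W_2)\int_{W_1}f=\lambda(W_1)\int_{W_2}f$ of $\ker\W$ becomes $l\int_0^a f(x)\,\d x = a\int_t^{t+l} f(x)\,\d x$; dividing by $a>0$ gives exactly the identity in $(2)$. Quantifying over all of $\W$ is the same as quantifying over all $t\geq a$, while $(2)$ asks only for $t>a$; these agree once one notes (as in the next step) that both sides are continuous in $t$, so the endpoint $t=a$ is free.

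Next, for $(2)\Leftrightarrow(3)$ I would introduce $F(t):=\int_a^t f(x)\,\d x$, which is locally absolutely continuous on $\R_{\geq 0}$ with $F'=f$ a.e.\ since $f\in L^1_\loc(\R_{\geq 0})$, and $g(t):=F(t+l)-F(t)=\int_t^{t+l} f(x)\,\d x$, also locally absolutely continuous. With $c:=\tfrac{l}{a}\int_0^a f(x)\,\d x$, condition $(2)$ says $g(t)=c$ for all $t>a$, equivalently (by continuity) $g\equiv c$ on $[a,\infty)$. For $(2)\Rightarrow(3)$: $g\equiv c$ on $[a,\infty)$ forces $g'(t)=f(t+l)-f(t)=0$ for a.e.\ $t\geq a$, i.e.\ $l$-periodicity of $f$ after $a$, and the single value $g(a)=c$ is precisely the equal-mean condition $\int_a^{a+l}f=\tfrac{l}{a}\int_0^a f$. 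For $(3)\Rightarrow(2)$: if $f$ is $l$-periodic after $a$ then $g'=0$ a.e.\ on $[a,\infty)$, so the locally absolutely continuous $g$ is constant there, with value $g(a)=\int_a^{a+l}f$, which equals $c$ by the equal-mean hypothesis; hence $g\equiv c$ on $[a,\infty)$, which is $(2)$.

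Finally, since $\PT=\lambda$ here, $\Adv_0(\W)$ is by definition the set of $[0,1]$-valued functions satisfying the $\ker\W$-relation, so the cycle above shows it consists exactly of the $[0,1]$-valued elements of $\ker\W$, namely the functions that are $l$-periodic after $a$ and keep the same mean as on $[0,a]$; without the range restriction this is $\ker\W$ itself. There are no boundary effects: the characterization is an honest a.e.\ periodicity identity together with a single scalar mean equation, with no degenerate extra freedom attached to the endpoints $0$ or $a$. The only genuinely delicate point I anticipate is the measure-theoretic passage between ``$g$ is constant on $[a,\infty)$'' and ``$f(\cdot+l)=f$ a.e.\ on $[a,\infty)$'' in both directions, which relies on the standard fact that a locally absolutely continuous function equals the integral of its a.e.-derivative, so that being constant is equivalent to having an a.e.-vanishing derivative; everything else is routine rearrangement.
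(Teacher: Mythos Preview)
Your argument is correct, and for $(2)\Leftrightarrow(3)$ it is genuinely different from (and simpler than) the paper's. The paper encodes the integral condition as a recurrence: it sets $C:=\tfrac{l}{a}\int_0^a f$ and $F_t(n):=\int_0^{a+t+ln} f$, observes that $(2)$ gives $F_t(n+1)-F_t(n)=C$, solves this as $F_t(n)=c(t)+(t/l+n)C$, deduces that the parameter function $c$ is $l$-periodic from the shift identity $F_{t+l}(n)=F_t(n+1)$, and finally applies Lebesgue's differentiation theorem to the difference quotients of $t\mapsto F_t(0)$ to recover the $l$-periodicity of $f$. Your route---differentiate $g(t)=\int_t^{t+l}f$ directly and use that a locally absolutely continuous function is constant iff its a.e.\ derivative vanishes---reaches the same conclusion in one step, and handles both directions of $(2)\Leftrightarrow(3)$ symmetrically. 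The paper's recurrence machinery is overkill here but is reused verbatim in the harder sliding-window and growing-reference propositions, where the recurrences are second-order or have non-constant coefficients; that uniformity is what it buys. Your remark on ``no boundary effects'' is a bit informal compared to the paper's usage: in the paper this means precisely that the $[0,1]$-range constraint on $\Adv_0$ does not cut down $\ker\W$ beyond the obvious, i.e.\ every element of $\ker\W$ is a real scalar multiple-plus-shift of some $[0,1]$-valued element, which is clear here since $l$-periodic functions after $a$ are bounded and can be affinely rescaled into $[0,1]$ while preserving the mean condition.
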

\begin{proof}
    \emph{1. $\Leftrightarrow$ 2.} is easily seen by rewriting.

    \emph{2. $\Rightarrow$ 3.} define $C \in \R$ and $F : \R_{\geq 0} \times \N \to \R$ as
    \begin{align*}
        C &:= \frac{l}{a}\int_0^a f(x) \d x & \text{and} \\ 
        F_t(n) &:= \int_0^{a+t+ln} f(x) \d x.
    \end{align*}
    By definition of $C,F_t$ and by assumption we have $C = F_t(n+1)-F_t(n)$ for all $t \geq 0$ and $n \in \N$. Considering this as a recurrent equation in $n$ for every single $t$ it thus follows 
    \begin{align*}
        F_t(n) = c(t) + (t/l+n) \cdot C
    \end{align*}
    where $c(t)$ is the function parameter for each $t$ which is uniquely determined by the equation. 
    
    On the other hand, since $F_{t+l}(n) = F_t(n+1)$ and $(t+l)/l+n = t/l + (n+1)$ we have $c(t) = c(t+l)$ so $c$ is $l$-periodic. Therefore, the difference 
    \begin{align*}
        F_{t+h}(0)-F_t(0) &= (c(t+h)-c(t)) + h/l C \\&= (c(t+h+l) - c(t+l)) + h/l C \\&= F_{t+h+l}(0) - F_{t+l}(0)
    \end{align*}
    is  $l$-periodic, too. On the other hand by Lebesgue's differentiation theorem $(F_{t+h}(0)-F_{t-h}(0))/(2h) \to f(t)$ as $h \to 0$ for almost every $t$. Therefore, $f$ is $l$-periodic as well for all $t \geq a$.
    Furthermore, we can conclude that
    \begin{align*}
        \int_{a}^{a+l} f(x) \d x = \frac{l}{a}\int_0^a f(x) \d x.
    \end{align*}

    \emph{3. $\Rightarrow$ 2.} if $f$ is $l$-periodic after $a$ we have $\int_{t}^{t+l} f(x) \d x = c$ is $t$-invariant for all $t > a$ and if it has the same mean it holds
    \begin{align*}
        \frac{l}{a}\int_0^a f(x) \d x = \int_{a}^{a+l} f(x) \d x = \int_{t}^{t+l} f(x) \d x.
    \end{align*}
    Therefore, the statement follows.
\end{proof}

To prove the next statement we need the following simple lemma:
\begin{lemma}
    \label{lem:limit}
    Let $a_n, b_n \subset \R$ and $c_1,c_2 \in \R$ with $c_1 \neq c_2$. The following are equivalent
    \begin{enumerate}
        \item The limits $a_n$ and $b_n$ as $n \to \infty$ exist
        \item The limits $a_n + c_1 b_n$ and $a_n +c_2b_n$ exist as $n \to \infty$
    \end{enumerate}
\end{lemma}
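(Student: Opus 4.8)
The plan is to recognize this as the statement that an invertible linear change of coordinates in $\R^2$ preserves convergence. The map $(x,y) \mapsto (x + c_1 y,\, x + c_2 y)$ is linear with matrix $\bigl(\begin{smallmatrix} 1 & c_1 \\ 1 & c_2 \end{smallmatrix}\bigr)$, whose determinant is $c_2 - c_1 \neq 0$; hence it is a homeomorphism of $\R^2$ onto itself, so the sequence $(a_n,b_n)$ converges in $\R^2$ if and only if its image $(a_n + c_1 b_n,\, a_n + c_2 b_n)$ does, and convergence in $\R^2$ is the same as coordinatewise convergence. I would, however, spell out both implications by hand to keep the argument elementary and self-contained.

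First I would prove $1 \Rightarrow 2$: if $a_n \to a$ and $b_n \to b$, then by continuity of addition and scalar multiplication $a_n + c_i b_n \to a + c_i b$ for $i = 1,2$, so both limits exist.

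Then I would prove $2 \Rightarrow 1$: set $s_i := \lim_n (a_n + c_i b_n)$ for $i = 1,2$. Subtracting the two convergent sequences gives $(c_1 - c_2) b_n = (a_n + c_1 b_n) - (a_n + c_2 b_n) \to s_1 - s_2$, and since $c_1 - c_2 \neq 0$ we may divide to get $b_n \to (s_1 - s_2)/(c_1 - c_2)$. Consequently $a_n = (a_n + c_1 b_n) - c_1 b_n$ is a difference of two convergent sequences, hence converges, to $s_1 - c_1 (s_1 - s_2)/(c_1 - c_2)$. This closes the equivalence.

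There is essentially no obstacle here; the single point that must not be glossed over is that the hypothesis $c_1 \neq c_2$ is exactly what makes the coefficient matrix invertible, and it is used precisely once, in the division step of $2 \Rightarrow 1$. In the intended application (the following proposition on the growing reference window) this lemma presumably lets one pass from the existence of limits of window-averaged integrals of $f$ — which arise naturally in the combined form $a_n + c\,b_n$ — back to the existence of the limits of the individual integral pieces, which is the information actually needed there.
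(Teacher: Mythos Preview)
Your proof is correct and matches the paper's argument exactly: both implications are handled by linearity, with the key step in $2\Rightarrow 1$ being the subtraction $(a_n+c_1b_n)-(a_n+c_2b_n)=(c_1-c_2)b_n$ followed by division by the nonzero scalar $c_1-c_2$. One small contextual slip: in the paper the lemma is invoked in the \emph{sliding windows} proposition (with $c_1=t$, $c_2=t+l$), not in the growing-reference one.
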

\begin{proof}
    \emph{1. $\Rightarrow$ 2.} is just linearity. 

    \emph{2. $\Rightarrow$ 1.} follows by considering $(a_n - c_1b_n) - (a_n - c_2 b_n) = (c_1-c_2) b_n$. As the limit $n \to \infty$ on the left hand side exists so does the limit on the right hand side which by linearity implies that the limit of $b_n$ exists. Thus, by linearity, the limit of $a_n$ exists, too. 
\end{proof}

\begin{proposition}[Sliding windows]
    For $\W = \{ ([t-l,t],[t,t+l]) \mid t \in \R \}$ with $l > 0$ and $f \in L^1_\loc(\R)$ the following are equivalent 
    \begin{enumerate}
        \item $f \in \ker \W$
        \item for all $t \in \R$ we have
        \begin{align*}
            \int_{t-l}^t f(x) \d x = \int_t^{t+l} f(x) \d x
        \end{align*}
        \item $f(x) = p(x) + tq(x)$ with $p$ and $q$ $l$-periodic and $\int_0^l q(x) \d x = 0$.
    \end{enumerate}
    In particular, the solution has boundary effects, and $\Adv_0(\W)\R \subsetneq \ker \W$ are the $l$-periodic functions. 
\end{proposition}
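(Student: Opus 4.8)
The plan is to establish the two claimed equivalences through the cycle $1\Leftrightarrow 2$ (immediate), $3\Rightarrow 2$ (a short computation) and $2\Rightarrow 3$ (the substantive step). For $1\Leftrightarrow 2$: since $W_1=[t-l,t]$ and $W_2=[t,t+l]$ both have Lebesgue measure $l$, the defining identity of $\ker\W$, namely $\lambda(W_2)\int_{W_1}f=\lambda(W_1)\int_{W_2}f$, reads simply $l\int_{t-l}^t f=l\int_t^{t+l}f$; cancelling $l$ gives statement 2 and conversely, so nothing more is needed here.

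For $2\Rightarrow 3$, the substantive direction, I would pass to the locally absolutely continuous antiderivative $G(t):=\int_0^t f(x)\,\d x$, so that $G'=f$ almost everywhere. Statement 2 then says $G(t)-G(t-l)=G(t+l)-G(t)$ for all $t$, i.e.\ the second difference $G(t+l)-2G(t)+G(t-l)$ vanishes identically; equivalently the ``discrete slope'' $B(t):=l^{-1}\bigl(G(t+l)-G(t)\bigr)$ satisfies $B(t+l)=B(t)$, as one sees by evaluating the relation at $t+l$. Since $G$ is locally AC, so is $B$, hence so is $A(t):=G(t)-tB(t)$; and using $G(t+l)=G(t)+lB(t)$ together with the $l$-periodicity of $B$, a one-line check gives $A(t+l)=A(t)$. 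Differentiating $G=A+tB$ yields $f=A'+B+tB'$ a.e., so that with $p:=A'+B$ and $q:=B'$ one obtains $f(x)=p(x)+x\,q(x)$, where $p$ and $q$ are $l$-periodic (the derivative of an $l$-periodic locally AC function is $l$-periodic) and $\int_0^l q=B(l)-B(0)=0$ by periodicity of $B$. This is statement 3.

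For $3\Rightarrow 2$: if $f=p+x\,q$ as in 3, then $\int_{t-l}^t p=\int_t^{t+l}p=\int_0^l p$ because the integral of an $l$-periodic function over any interval of length $l$ is constant, while the substitution $x\mapsto x+l$ together with $\int_{t-l}^t q=\int_0^l q=0$ gives $\int_t^{t+l}x\,q(x)\,\d x=\int_{t-l}^t (x+l)q(x)\,\d x=\int_{t-l}^t x\,q(x)\,\d x$; adding the two pieces yields statement 2 (and $2\Rightarrow 1$ is trivial). For the two ``in particular'' assertions: a function $p+x\,q$ with $p,q$ $l$-periodic and $q\not\equiv 0$ is unbounded, since at the points $x_0+nl$ through any $x_0$ with $q(x_0)\neq 0$ it equals $p(x_0)+(x_0+nl)q(x_0)$, which tends to $\pm\infty$ as $n\to\infty$; hence any element of $\ker\W$ with values in $[0,1]$ must have $q\equiv 0$, i.e.\ be $l$-periodic, and conversely every $[0,1]$-valued $l$-periodic function lies in $\ker\W$. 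Therefore $\Adv_0(\W)|_\R$ is exactly the set of $[0,1]$-valued $l$-periodic functions, a proper subset of $\ker\W$ (which also contains unbounded functions such as $x\mapsto x\sin(2\pi x/l)$), which is the announced boundary effect.

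The one genuinely delicate point is the regularity bookkeeping in $2\Rightarrow 3$: the argument must be carried out at the level of the locally AC antiderivative $G$ rather than of $f$ itself (which is only $L^1_{\loc}$), so that ``periodic slope'' is meaningful and $G=A+tB$ may legitimately be differentiated back to recover $f$ a.e.; everything else reduces to the elementary identities above. One could instead iterate statement 2 to see that $n\mapsto G(t+nl)$ is affine for each fixed $t$ and read $A$ and $B$ off its intercept and slope (\cref{lem:limit} being the kind of elementary device one invokes to control such per-$t$ parameters in that variant), but verifying continuity of the assembled functions at the lattice points then requires care, so the explicit choice $B(t)=l^{-1}(G(t+l)-G(t))$ is cleaner and avoids fractional-part functions altogether.
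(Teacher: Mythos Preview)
Your proof is correct and is in fact cleaner than the paper's. Both arguments agree on $1\Leftrightarrow 2$, on $3\Rightarrow 2$ (done by the same periodicity-of-integrals computation), and on the boundary-effects observation that $q\not\equiv 0$ forces $p+xq$ to be unbounded. The genuine difference lies in $2\Rightarrow 3$.

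The paper fixes $t$ and considers the integer-indexed sequence $F_t(n)=\int_0^{t+nl}f$, reads statement~2 as the second-order linear recurrence $F_t(n-1)-2F_t(n)+F_t(n+1)=0$, and writes its general solution as $F_t(n)=c_1(t)+c_2(t)(t/l+n)$. It must then separately argue that $c_1,c_2$ are $l$-periodic (via $F_{t+ml}(n)=F_t(n+m)$) and continuous, and finally invoke Lebesgue's differentiation theorem together with the auxiliary \cref{lem:limit} to disentangle the two limits $P$ and $Q$ from the single difference quotient. You instead work entirely in the continuous variable: the explicit choice $B(t)=l^{-1}(G(t+l)-G(t))$ is locally AC by construction, its $l$-periodicity is statement~2 at $t+l$, and $A=G-tB$ is then visibly $l$-periodic, so a single differentiation recovers $p=A'+B$ and $q=B'$ with $\int_0^l q=B(l)-B(0)=0$. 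This bypasses \cref{lem:limit} and the continuity verification altogether. Your closing paragraph even anticipates the paper's route (``iterate statement~2 to see that $n\mapsto G(t+nl)$ is affine'') and correctly identifies why the explicit $B$ is tidier. One cosmetic remark: when you argue unboundedness via $f(x_0+nl)=p(x_0)+(x_0+nl)q(x_0)$, bear in mind that $p,q$ are only defined almost everywhere, so strictly speaking one should take a Lebesgue point of $q$ or argue on a set of positive measure; the paper is equally informal here, so this is not a gap.
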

\begin{proof}
    \emph{1. $\Leftrightarrow$ 2.} is easily seen by rewriting.

    \emph{2. $\Rightarrow$ $f(t) = q(t) + tq(t)$ with $p,q$ $l$-periodic:} 
    defining $F_t(n) = \int_0^{t+nl} f(x) \d x$ induces the a recurrent equation for every $t$ 
    \begin{align*}
        F_t(n-1) + F_t(n+1) - 2F_t(n) = 0
    \end{align*}
    which has the solution $F_t(n) = c_1(t) + c_2(t)(t/l+n)$ with parameter functions $c_1,c_2$. 
    
    As $F_{t+ml}(n) = F_{t}(n+m)$ we have 
    \begin{align*}
        0 &= F_{t+ml}(n) - F_{t}(n+m) \\&= (c_1(t+ml)-c_1(t)) + (c_2(t+ml)-c_2(t))(t/l+n+m)
    \end{align*}
    and therefore 
    \begin{align*}
        0 &= 0 - 0 \\&= (F_{t+ml}(n) - F_{t}(n+m)) - (F_{t+ml}(n+1) - F_{t}(n+m+1)) \\&= (c_2(t+ml)+c_2(t))
    \end{align*}
    and hence $c_2$ is $l$-periodic but this then also implies that 
    \begin{align*}
        0 = (c_1(t+ml)-c_1(t)) + \underbrace{(c_2(t+ml)-c_2(t))}_{=0}(t/l+n+m)
    \end{align*}
    so $c_1$ is $l$-periodic, too. 

    Furthermore, as $t \mapsto F_t(n)$ is continuous and we have 
    \begin{align*}
        |c_2(t+h)-c_2(t)|
        &=|(F_{t+h}(0)-F_t(0))-(F_{t+h}(1)-F_t(1))|
        \\&\leq|F_{t+h}(0)-F_t(0)|+|F_{t+h}(1)-F_t(1)| \xrightarrow{h \to 0}0
    \end{align*}
    we conclude that $c_2$ and hence $c_1$ are continues, too.
    
    Now considering 
    \begin{align*}
        \frac{F_{t+h}(0) - F_{t-h}(0)}{2h} 
          &=\quad \frac{c_1(t+h)-c_1(t-h)}{2h} 
        \\&\quad+ \frac{c_2(t+h)-c_2(t-h)}{2h} \cdot (t+h) 
        \\&\quad+ c_2(t+h) \cdot \frac{(t+h)-(t-h)}{2h}
        \\&=\quad \underbrace{\frac{c_1(t+h)-c_1(t-h)}{2h} + c_2(t+h)}_{=:P(t)} 
        \\&\quad+ t \cdot \underbrace{\frac{c_2(t+h)-c_2(t-h)}{2h}}_{=: Q(t)} 
        \\&\quad+ \underbrace{\frac{c_2(t+h)-c_2(t-h)}{2}}_{\in o(h)}
    \end{align*}
    we see the quotient is of the form $P(t)+t Q(t) + o(h)$ for all $h > 0$ with $P$ and $Q$ $l$-periodic since $c_1$ and $c_2$ are $l$-periodic.
    On the one hand, have $(F_{t+h}(0) - F_{t-h}(0))/(2h) \to f(t)$ as $h \to 0$ by Lebesgue's differentiation theorem. As $P$ and $Q$ are $l$-periodic we obtain the same sequences for $a_n := P(t+h_n) = P(t+h_n+l)$ and $b_n := Q(t+h_n) = Q(t+h_n+l)$ so by \cref{lem:limit} with $c_1 = t$ and $c_2 = t+l$ we see that the (point-wise) limits exist and thus obtain $l$-periodic $L^1_\loc$-function $p$ and $q$ with $f(t) = p(t) + q(t)t$.
    
    As $f(t) \in [0,1]$ it follows that $q(t) = 0$ as otherwise $q(t)t > 1$ for $|t|$ sufficiently large. In particular, the solution has boundary effects. 
    
    \emph{2. $\Leftrightarrow$ 3.} if 3. holds or -- according to the previous claim -- if 2. holds, we have $f(t) = p(t) + tq(t)$ with $p,q \in L^1_\loc(\R)$ $l$-periodic. Therefore, it remains to show that $3. \Leftrightarrow \int_0^l q(x) \d x = 0$. It holds
    \begin{align*}
             \int_{t-l}^t f(x) \d x - \int_{t}^{t+l} f(x) \d x 
          &= \int_{t-l}^t p(x) + xq(x) \d x - \int_{t}^{t+1} p(x) + xq(x) \d x
        \\&= \int_{t-l}^t p(x) + xq(x) \d x - \int_{t-l}^{t} \underbrace{p(x-l)}_{=p(x)} + (x-l)\underbrace{q(x-l)}_{=q(x)} \d x
        \\&= \int_{t-l}^t ((x)-(x-l))q(x) \d x
        \\&= l \int_{t-l}^t q(x) \d x
    \end{align*}
    and as $q$ is $l$-periodic we have $\int_{t-l}^t q(x) \d x = \int_{0}^l q(x) \d x$. Therefore $f \in \ker \W$ if and only if $\int_{0}^l q(x) \d x = 0$.
\end{proof}

\begin{proposition}[Growing reference]
    Let $\W = \{ ([0,t],[t,t+l]) \mid t \geq a \}$ with $a,l > 0$ and $f \in L^1_\loc(\R_{\geq 0})$. The following are equivalent:
    \begin{enumerate}
        \item $f \in \ker \W$
        \item for all $t > a$ we have
        \begin{align*}
            \frac{l}{t}\int_{0}^t f(x) \d x = \int_t^{t+l} f(x) \d x
        \end{align*}
        \item $f(t) = \I_{[0,a]}(t)g(t) + p(t) + tq(t)$ for all $t > 0$ with $p$ and $q$ $l$-periodic, $p(t) = \int_0^t q(x) \d x + C$, $\int_0^l q(x) \d x = 0$, and $\int_0^a g(x) \d x = 0$.
    \end{enumerate}
    In particular, there are boundary effects, i.e., $\Adv_0(\W)\R \subsetneq \ker \W$, and $\Adv_0(\W)\R$ are exactly the functions that are arbitrary before $a$ and then constant with the same mean as before, i.e., $a^{-1}\int_0^a f(x) \d x = f(t)$ for all $t > a$.
\end{proposition}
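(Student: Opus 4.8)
The plan is to follow the template of the two preceding propositions, but using a substitution that linearizes the growing-window relation. The equivalence \emph{1 $\Leftrightarrow$ 2} is pure rewriting: for the pair $(W_1,W_2)=([0,t],[t,t+l])\in\W$ one has $\lambda(W_1)=t$ and $\lambda(W_2)=l$, so the defining identity of $\ker\W$ reads $l\int_0^t f(x)\,\d x=t\int_t^{t+l}f(x)\,\d x$, and dividing by $t>0$ gives 2 (the discrepancy between ``$t\geq a$'' and ``$t>a$'' is immaterial since both sides are continuous in $t$).

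The core is \emph{2 $\Rightarrow$ 3}. Put $G(t):=\int_0^t f(x)\,\d x$, which is locally absolutely continuous with $G'=f$ a.e.\ by Lebesgue's differentiation theorem. Condition 2 reads $\tfrac{l}{t}G(t)=G(t+l)-G(t)$, i.e.\ $\tfrac{G(t+l)}{t+l}=\tfrac{G(t)}{t}$ for all $t>a$, so $H(t):=G(t)/t$ satisfies $H(t+l)=H(t)$ for $t>a$. Since $x\mapsto 1/x$ is smooth on $(0,\infty)$, $H$ is absolutely continuous on compacts of $(0,\infty)$; let $p$ be the $l$-periodic function on $\R$ extending $H|_{(a,a+l]}$. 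A short induction using $H(t+l)=H(t)$ shows $p=H$ on all of $(a,\infty)$, and by continuity $H(a)=p(a)$ as well as $H(a+l)=H(a)$, so $p$ is continuous and hence locally absolutely continuous on $\R$. Setting $q:=p'$, the function $q$ is $l$-periodic with $\int_0^l q=p(l)-p(0)=0$ and $p(t)=p(0)+\int_0^t q$, i.e.\ $C:=p(0)$. Differentiating $G(t)=tH(t)$ gives $f(t)=H(t)+tH'(t)=p(t)+tq(t)$ for a.e.\ $t>a$. On $[0,a]$ define $g:=f-p-tq$; then, since $xp(x)$ is an antiderivative of $p(x)+xq(x)$, $\int_0^a g(x)\,\d x=G(a)-[xp(x)]_0^a=aH(a)-ap(a)=0$. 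This is exactly 3.

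For \emph{3 $\Rightarrow$ 2}, assume the decomposition of 3 and fix $t>a$. Since $\int_0^a g=0$ and $\tfrac{\d}{\d x}(xp(x))=p(x)+xq(x)$, we get $\int_0^t f=[xp(x)]_0^t=tp(t)$ and $\int_t^{t+l}f=[xp(x)]_t^{t+l}=(t+l)p(t)-tp(t)=lp(t)$ using $l$-periodicity of $p$, whence $\tfrac{l}{t}\int_0^t f=lp(t)=\int_t^{t+l}f$, which is 2. Finally, $\Adv_0(\W)\R$ consists of the $f\in\ker\W$ that are in addition $[0,1]$-valued; writing such an $f$ via 3, the bound $p(t)+tq(t)\in[0,1]$ for a.e.\ $t>a$ forces $|q(x)|\leq(1+|p(x)|)/x$, so $\int_0^l|q|=\int_t^{t+l}|q|\leq\tfrac1t\bigl(l+\int_0^l|p|\bigr)\xrightarrow{t\to\infty}0$, hence $q\equiv 0$; then $p\equiv C$, so $f\equiv C$ on $(a,\infty)$ and $f=g+C$ on $[0,a]$ with $\int_0^a g=0$, i.e.\ $a^{-1}\int_0^a f=C=f(t)$ for $t>a$ --- arbitrary before $a$, then constant with the same mean. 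Since any nonzero $l$-periodic zero-mean $q$ yields (via 3) an unbounded member of $\ker\W$, we get $\Adv_0(\W)\R\subsetneq\ker\W$, i.e.\ boundary effects.

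The step I expect to cause the most friction is \emph{2 $\Rightarrow$ 3}: the bookkeeping around the boundary point $t=a$ --- identifying $H|_{(a,\infty)}$ with the restriction of a genuine $l$-periodic $L^1_{\loc}$-function, verifying the matching $H(a)=p(a)$ (so that $\int_0^a g=0$ comes out clean), and taking care of the almost-everywhere differentiation. Once the substitution $H(t)=G(t)/t$ is in place, \emph{1 $\Leftrightarrow$ 2}, \emph{3 $\Rightarrow$ 2}, and the $\Adv_0$ computation are routine.
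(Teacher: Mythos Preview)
Your argument is correct and hinges on the same key fact as the paper's proof: the function $H(t)=t^{-1}\int_0^t f$ is $l$-periodic on $(a,\infty)$. The paper, following the template of the preceding propositions, sets up the discrete recursion $F_t(n)=\int_0^{t+ln}f$, solves it to $F_t(n)=(t+ln)\,c(t)$ with $c(t)=t^{-1}F_t(0)$, and then reads off the periodicity of $c$ from $F_{t+l}(n)=F_t(n+1)$; your one-line rewriting $G(t+l)/(t+l)=G(t)/t$ reaches the same $c=H$ without the recurrence machinery. Your \emph{3 $\Rightarrow$ 2} is likewise more economical --- you exploit that $xp(x)$ is an antiderivative of $p(x)+xq(x)$ directly, whereas the paper integrates by parts via the auxiliary $Q(t)=\int_0^t q$. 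Finally, your $\Adv_0$ step (bounding $\int_0^l|q|$ by $t^{-1}(l+\int_0^l|p|)\to 0$) is a cleaner $L^1$ version of the paper's pointwise ``$|f|\leq 1$ forces $q=0$''. In short: the same proof in substance, but your substitution collapses the discrete recursion the paper carries over from the sliding-window case.
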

\begin{proof}
    \emph{1. $\Leftrightarrow$ 2.} is easily seen by rewriting.

    \emph{2. $\Rightarrow$ Properties of 3. except $\int_0^a g(x) \d x = 0$} define $F_t(n) = \int_0^{t+ln} f(x) \d x$ then we have $$(t+l(n-1))^{-1} F_t(n-1) = l^{-1}(F_t(n)-F_t(n-1))$$ for all $t \geq a$ which allows the recursive definition for each $t > a$
\begin{align*}
F_t(n) &= \frac{l + l(n-1) + t}{l(n-1) + t} F_t(n-1) \\
    \\&=  \underbrace{\frac{ln + t}{l(n-1) + t} \frac{l(n-1) + t}{l(n-2) + t}}_{= \frac{ln +t}{l (n-2) + t}}  F_t(n-2) 
    \\&= \cdots = \frac{ln + t}{t} F_t(0)
    \\&= (t+ln)\underbrace{\frac{1}{t}F_t(0)}_{=:c(t)}
\end{align*}
As before we conclude from $F_{t+l}(n) = F_{t}(n+1)$ that 
\begin{align*}
    0 = F_{t}(n+1) - F_{t+l}(n) = (t+l(n+1))(c(t)-c(t+l))
\end{align*}
so $c$ is $l$-periodic. And from
\begin{align*}
    (F_{t+h}(1)-F_t(1))-(F_{t+h}(0)-F_t(0)) &= 
    c(t+h)-c(t)
\end{align*}
that since $t \mapsto F_t(n)$ is continuous that $c(t)$ is continuous. We extend $c$ to a function on all of $\R$ by using its periodicity.

Considering the quotient for all $t> a, \, 0 < h < t-a$ 
\begin{align*}
        \frac{F_{t+h}(0)-F_{t-h}(0)}{2h} 
      &=\left(t-h\right)\frac{c(t+h)-c(t-h)}{2h} + c(t+h).
\end{align*}
Thus, by Lebesgue's differentiation theorem, we then conclude that $f(t) = \I_{[0,a]}(t)g(t) + p(t) + tq(t)$ with $p,q$ $l$-periodic, and $p(t) = C + \int_0^t q(x) \d x$ for all $t > 0$ and $g$ arbitrary. In particular, as $p$ is $l$-periodic we have $0 = p(l)-p(0) = \int_0^l q(x) \d x$. 

\emph{2. $\Leftrightarrow$ 3.} due to the relations, we have
\begin{align*}
    Q(t) &= \int_0^t q(x) \d x = p(t)-C, & \text{and}\\
    \int_0^t Q(x) \d x &= \int_0^t p(x) \d x - tC.
\end{align*}
Denote by $C_0 = \int_0^a g(x) \d x$, then we have
\begin{align*}
         \frac{1}{t}\int_0^t f(x) \d x 
      &= \frac{1}{t}\left(\int_0^a g(x) \d x + \int_0^t p(x) + xq(x) \d x\right)
    \\&= \frac{1}{t}\left(C_0 + \int_0^t p(x) \d x + \left[Q(x)x\right]_0^t - \int_0^t Q(x) \d x\right)
    \\&= \frac{1}{t}\left(C_0 + tC + Q(t)t \right) 
    \\&= \frac{C_0}{t} + C + Q(t)\\
         \frac{1}{l}\int_t^{t+l} f(x) \d x
      &= \frac{1}{l}\left(\int_t^{t+l} Q(x) \d x + lC + \left[Q(x)x\right]_t^{t+l} - \int_t^{t+l} Q(x) \d x\right)
    \\&= \frac{1}{l}\left(lC + Q(t)l\right)
    \\&= C + Q(t), \\
    \Rightarrow  \frac{C_0}{t}  &= \frac{1}{t}\int_0^t f(x) \d x-\frac{1}{l}\int_t^{t+l} f(x) \d x 
\end{align*}
we conclude that $f \in \ker \W$ if and only if $C_0 = 0$.

As before we conclude that form $|f(t)| \leq 1$ that $q(t) = 0$ and hence $p(t) = C$ is constant and $a^{-1}\int_0^a g(x) + p(x) \d x = l^{-1}\int_a^{a+l} p(x) \d x = C$ with $-C \leq g(t) \leq 1-C$.
\end{proof}
}

\end{document}